%% file: main_article.tex
\documentclass[final,hidelinks,onefignum, onetabnum]{siamart251216}

\input{main_shared}

\begin{document}

\maketitle

\begin{abstract}
Given two symmetric positive-definite matrices $A, B \in \mathbb{R}^{n \times n}$, we study the spectral properties of the interpolation $A^{1-x} B^x$ for $0 \leq x \leq 1$. The presence of `common structures' in $A$ and $B$, eigenvectors pointing in a similar direction, can be investigated using this interpolation perspective. Generically, exact log-linearity of the operator norm $\|A^{1-x} B^x\|$ is equivalent to the existence of a shared eigenvector in the original matrices; stability bounds show that approximate log-linearity forces principal singular vectors to align with leading eigenvectors of both matrices. These results give rise to and provide theoretical justification for a multi-manifold learning framework that identifies common and distinct latent structures in multiview data.
\end{abstract}

\begin{keywords}
matrix interpolation, spectral analysis, singular values, eigenvector alignment, positive definite matrices, manifold learning, multimodal data
\end{keywords}

\begin{MSCcodes}
15A18, 47A56, 65F35, 68T10
\end{MSCcodes}

\section{Introduction and Results}
\label{sec:intro}

\subsection{The problem}
Let $A, B \in \mathbb{R}^{n \times n}$ be two symmetric and positive definite matrices. We assume that $A$ has eigenvalues $\sigma(A) = \left\{\lambda_1, \dots, \lambda_n\right\}$ and $B$ has eigenvalues $\sigma(B) = \left\{ \mu_1, \dots, \mu_n \right\}$. 
We make no additional assumptions on $A$ and $B$ and are motivated by the question whether the eigenvectors of $A$ could, in some natural way, be matched with the eigenvectors of $B$ (the underlying motivation comes from a concrete application discussed in \cref{sec:app_brief}). This question is sufficiently vague that many solutions are possible: for example, one could think of the eigenvectors as two sets of $n$ vectors on $\mathbb{S}^{n-1}$ and then match them by minimizing over some notion of distance. A particular way of matching spectra was proposed in the multimodal manifold learning literature \cite{Katz2025} (see \cref{sec:app_brief} and  \cref{section:app_to_manifold} for details); its effectiveness in concrete applications motivated our interest in the underlying theory.
Since $A$ and $B$ are symmetric and positive definite, their powers $A^{1-z}$ and $B^z$ are well-defined for any $z \in \mathbb{C}$ and, in particular, for real $0 \leq x \leq 1$.  $A^{1-x}$ and $B^x$ are also symmetric and positive definite. Their product $A^{1-x}B^x$ is not necessarily diagonalizable; however, it is a square matrix that has at least $n$ singular values. One could now try to understand the singular values of $A^{1-x} B^x$ for $0 \leq x \leq 1$. 
The main purpose of our paper is to show that the singular values, i.e. the $n$ real-valued functions $x \rightarrow \sigma_k(A^{1-x}B^x)$, for $0 \leq x \leq 1$,
\begin{enumerate}
    \item are an interesting object with an interesting underlying mathematical structure (see \cref{subsec:identifying_common_eig} and  \cref{subsec:stability})
    \item which prove to be useful in specific applications; we discuss the case of multi-manifold learning in \cref{sec:app_brief} and \cref{section:app_to_manifold}.
\end{enumerate}

A very rough motivation is as follows: one sometimes measures the same object in different ways which may end up resulting in two different symmetric positive-definite (kernel) matrices; however, these two matrices should correspond to the same underlying `ground truth', and this similarity should be reflected in their spectrum, where there should be a natural `bijection' between eigenvectors. An example is shown in \cref{Fig:cow_data_projs_3d}: the same overall geometry is captured by similar eigenvectors (used here to color the point clouds).

\begin{figure}[h!] 
    \centering
    \includegraphics[width=0.5\textwidth]{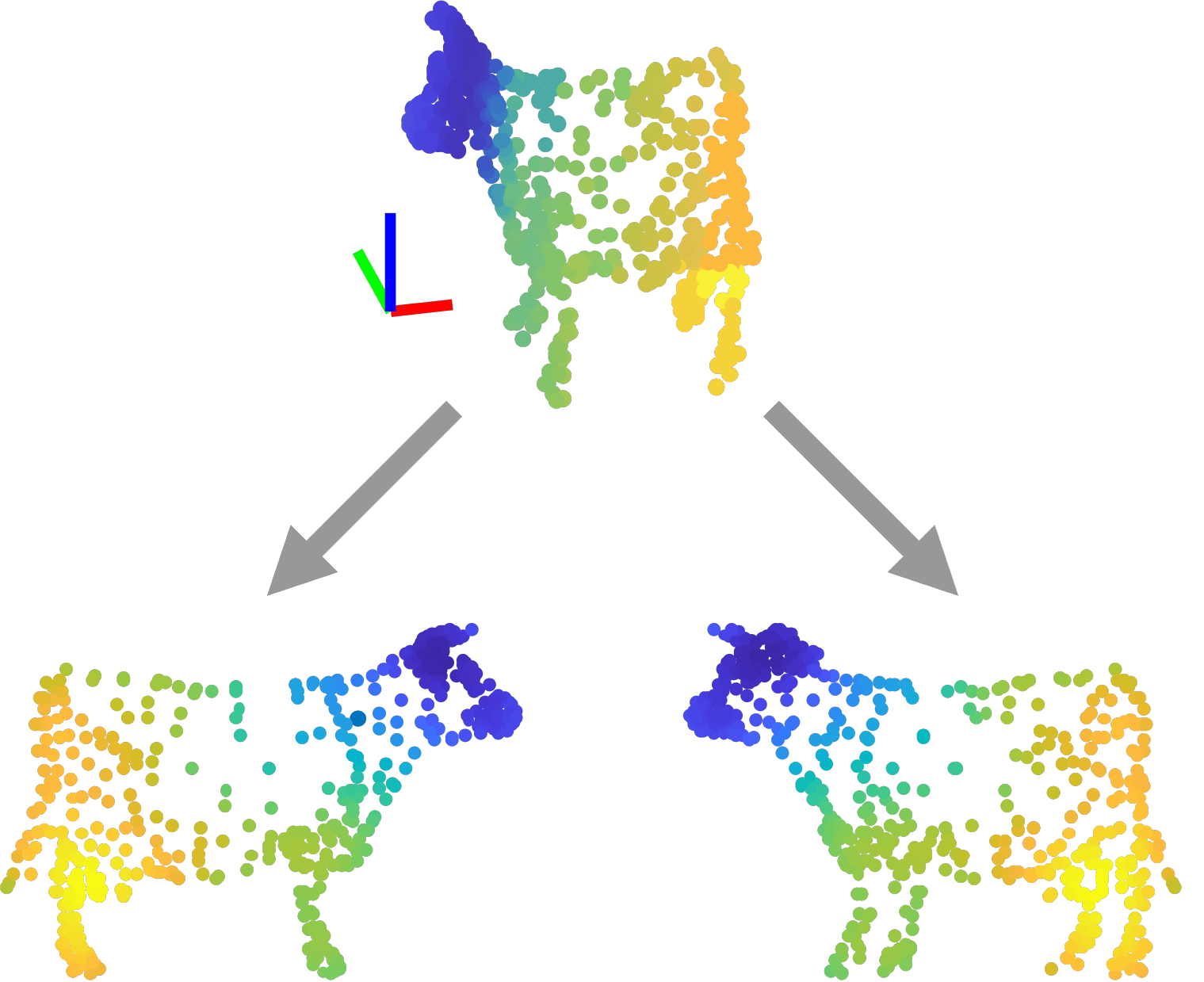}
    \captionsetup{width=\textwidth}
    \caption[Cows illustration: scatter plot]{Two 2D projections of a 3D point cloud of a cow on two different view angles: given only the two two-dimensional point sets (discretized via a kernel into two symmetric positive-definite matrices $A,B \in \mathbb{R}^{n \times n}$), can one automatically discover that the eigenstructure captures the same underlying ground truth? (Details in \cref{sec:app_brief} and \cref{section:app_to_manifold}).
    }
     \label{Fig:cow_data_projs_3d}
\end{figure}


\subsection{Identifying Common Eigenvectors} 
\label{subsec:identifying_common_eig}

We are now able to motivate the first basic result: the largest singular value or, equivalently, the operator norm
$$ \sigma_1(A^{1-x} B^x) = \|A^{1-x} B^x\|$$
has the property that $\|A^{1-x} B^x\| \leq \|A\|^{1-x} \|B\|^x$. Moreover, we will argue that for generic pairs of matrices $A,B$, we have equality if and only if their operator norm is realized by a shared eigenvector $v \in \mathbb{R}^{n}$ normalized to $\|v\|_{\ell^2}=1$ which satisfies
$$ \|A\| = \|A v\| = \|Bv\| = \|B\|.$$
One direction is simple:
\begin{align*}
    \log \left( \|A^{1-x} B^x\| \right) &\leq  \log \left( \|A^{1-x}\| \| B^x\| \right) = \log \left( \|A\|^{1-x} \| B\|^x \right) \\
    &= \log \left( \|A\|^{1-x} \right) + \log \left(  \| B\|^x \right) = (1-x) \log\|A\| + x \log\|B\|.
\end{align*}
If $A$ and $B$ have a common eigenvector $v \in \mathbb{R}^{n}$, say $Av = \lambda v$ and $Bv = \mu v$, then
\begin{align*}
     \|A^{1-x} B^x v\| = \mu^x  \|A^{1-x}  v\| = \lambda^{1-x} \mu^x,
\end{align*}
for which the logarithm is linear. 
One could now wonder about the inverse result: does the linearity of $  \log \left( \|A^{1-x} B^x v\| \right)$ imply that $v$ is an eigenvector of both $A$ and $B$? If, for example, $B=2A$, then $  \log \left( \|A^{1-x} B^x v\| \right)$ is linear for each $v \in \mathbb{R}^n$, we therefore have to ensure that $A$ and $B$ are `different'. Our assumption will be that the ratio of two eigenvalues $\lambda_i/\mu_j$ uniquely identifies $\lambda_i$ and $\mu_j$ (a property satisfied by generic pairs of matrices).

\begin{theorem}[Identifiability]
\label{theorem:log_linear_lines}
Let $A, B \in \mathbb{R}^{n \times n}$ be two symmetric and positive definite matrices. Suppose the map
$ d: \sigma(A) \times \sigma(B) \rightarrow \mathbb{R}_{>0}$ defined by $d(\lambda, \mu) = \lambda/\mu$ is injective and suppose
there exists $0 \neq v \in \mathbb{R}^n$ such that
$$ \log \left\| A^{1 -  x} B^{  x} v\right\| \quad \mbox{is linear,}$$
then $v$ is an eigenvector of both $A$ and $B$.
\end{theorem}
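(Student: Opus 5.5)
Expand $v$ in the eigenbases and compute $f(x) = \|A^{1-x}B^x v\|^2$ explicitly as an exponential sum. Let $A = \sum_i \lambda_i a_i a_i^T$ and $B=\sum_j \mu_j b_j b_j^T$ with orthonormal eigenvectors, and set $c_j = \langle v, b_j\rangle$. Then $B^x v = \sum_j \mu_j^x c_j b_j$ and $A^{1-x}B^x v = \sum_{i,j} \lambda_i^{1-x}\mu_j^x c_j \langle b_j,a_i\rangle a_i$. Hence
\begin{align*}
f(x) = \sum_i \lambda_i^{2(1-x)} \Bigl(\sum_j \mu_j^x c_j \langle b_j, a_i\rangle\Bigr)^2
= \sum_{i,j,k} \lambda_i^2\, c_j c_k \langle b_j,a_i\rangle\langle b_k,a_i\rangle \, \exp\bigl(x\log(\mu_j\mu_k/\lambda_i^2)\bigr).
\end{align*}
Linearity of $\log\|A^{1-x}B^xv\|$ means $f(x) = C e^{\alpha x}$ for some constants $C>0$ and $\alpha$. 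The equation holds on $[0,1]$, and both sides are finite exponential sums, hence real-analytic, so it extends to all $x\in\mathbb{R}$. Finite exponential sums with distinct exponents are linearly independent. So after grouping the terms by exponent, every group with exponent different from $\alpha$ must have zero total coefficient.

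I will avoid the cross terms $j\neq k$, where the injectivity hypothesis is awkward to use, by a positivity argument on the extreme exponents. Group by $i$: $f(x)=\sum_i \lambda_i^{2-2x} g_i(x)^2$ with $g_i(x)=\sum_j \mu_j^x c_j\langle b_j,a_i\rangle$. Each summand is nonnegative. Let $i$ range over indices with $g_i\not\equiv 0$. For such $i$, let $j$ be extremal among the indices with $c_j\langle b_j,a_i\rangle\neq 0$.
\begin{itemize}
\item As $x\to+\infty$, with $j_{\max}(i)$ the largest such $\mu_j$, we get $\lambda_i^{2-2x}g_i(x)^2 \sim \text{const}\cdot(\mu_{j_{\max}(i)}/\lambda_i)^{2x}$ with a positive constant.
\item As $x\to-\infty$, the smallest such $\mu_j$ dominates in the same way.
\end{itemize}
Since $f = Ce^{\alpha x}$ and all summands are nonnegative, each summand is at most $Ce^{\alpha x}$. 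This forces $(\mu_{j_{\max}(i)}/\lambda_i)^2 \le e^{\alpha}$ and $(\mu_{j_{\min}(i)}/\lambda_i)^2\ge e^{\alpha}$. Hence $\mu_{j_{\max}(i)}/\lambda_i = \mu_{j_{\min}(i)}/\lambda_i = e^{\alpha/2}$, so for each such $i$ only one $\mu$-value occurs. Moreover, every active pair $(\lambda_i,\mu_j)$ has ratio $\lambda_i/\mu_j = e^{-\alpha/2}$. By injectivity of $d$, all active pairs share a single eigenvalue $\lambda$ of $A$ and a single eigenvalue $\mu$ of $B$.

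Injectivity is on eigenvalues, not indices, so I will interpret the active indices as eigenvalue classes. Let $E_\lambda$ be the eigenspace of $A$ for $\lambda$, and $F_\mu$ the eigenspace of $B$ for $\mu$. We have $v=\sum_j c_j b_j$, and every $j$ with $c_j\neq 0$ has $\langle b_j,a_i\rangle\neq0$ for some $i$, since the $a_i$ form a basis. So every $b_j$ in the support of $v$ is active and has eigenvalue $\mu$, giving $v\in F_\mu$ and $Bv=\mu v$.

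For $A$, we have $Bv = \mu v$, so $A^{1-x}B^xv = \mu^x A^{1-x}v$. The components of $v$ along $a_i$ vanish unless $i$ is active, and active $i$ have $\lambda_i=\lambda$. Hence $v\in E_\lambda$ and $Av=\lambda v$.

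The main obstacle is making this dominant-exponent argument rigorous when eigenvalues repeat, and checking that no cancellation inside $g_i$ spoils the asymptotics. There is none, because $g_i$ is a finite sum of distinct exponentials with nonzero coefficients after grouping equal $\mu_j$; if a group's coefficient vanished, that group is simply not active.
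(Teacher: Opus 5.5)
Your proof is correct and follows essentially the paper's route: expand $\|A^{1-x}B^xv\|^2$ as a finite exponential sum, use that the extreme exponents carry strictly positive (squared) coefficients to force every active ratio $\mu_l/\lambda_k$ to the same value, and finish with injectivity of $d$ and the fact that the $a_k$ form a basis. Your per-block squeeze $0\le \lambda_i^{2-2x}g_i(x)^2\le Ce^{\alpha x}$ as $x\to\pm\infty$ replaces the paper's min/max-frequency bookkeeping more cleanly and handles repeated eigenvalues more transparently; with repeated $\mu$'s, though, run the support step on each $\pi_{\mu'}v\neq 0$ rather than on individual $b_j$, since the group coefficient that must be nonzero is $\langle \pi_{\mu'}v,a_i\rangle$.
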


For example, if $A,B \in \mathbb{R}^{n \times n}$ are two diagonal matrices with entries that are chosen uniformly at random from $[1,2]$ that are then sorted in decreasing order. The $n$ functions $\log \sigma_k (A^{1-x} B^{x})$ then form, almost surely, $n$ lines.

\subsection{A stability version}
\label{subsec:stability}

While Theorem 1 is an encouraging fact, it is only applicable if the leading eigenvectors of $A$ and $B$ are \textit{exactly} the same; this is rarely the case in practical applications. Luckily, Theorem 1 remains `morally' true in the case when $A$ and $B$ `almost' share an eigenvector. 
To simplify exposition, we remove the scaling symmetry and assume without loss of generality that $\lambda_1(A) = \|A\| = 1 = \|B\|= \lambda_1(B)$. Then $\|A^{1-x} B^x\| \leq 1$ and Theorem 1 states that subject to the genericity assumption, the case of equality $\|A^{1-x} B^x\| = 1$ for some $0 < x < 1$ implies that $A$ and $B$ have an eigenvector in common.
We can now state the main stability result: if $\|A^{1-x}B^x\|$ is very close to 1, then the left principal singular vector $u \in \mathbb{R}^n$ of $A^{1-x}B^x$ has to point in nearly the same direction as the leading eigenvector $A a_1 = \|A\| a_1 = a_1$. Moreover, the right principal singular vector $v \in \mathbb{R}^n$ of $A^{1-x}B^x$ has to point in nearly the same direction as the leading eigenvector $B b_1 = \|B\| b_1 = b_1$.

\begin{theorem}[Stability]\label{theorem:iff_bound} 
Let $A, B \in \mathbb{R}^{n \times n}$ be symmetric, positive definite, normalized to $\|A\|=1=\|B\|$. We assume $a_1$ and $b_1$, satisfying $Aa_1 = a_1$ and $Bb_1 = b_1$, are $\ell^2$-normalized eigenvectors corresponding to the eigenspace associated with eigenvalue 1, which has multiplicity 1. Let $\lambda_2(A) < 1$ and $\lambda_2(B) < 1$ denote the second largest eigenvalues of $A$ and $B$, respectively. Let $0 \leq x \leq 1$  and let $u \in \mathbb{R}^n$ and $v \in \mathbb{R}^n$ be the principal left and right singular vectors of $A^{1-x}B^{x}$, respectively. Then
\begin{equation}\label{eq:theorem_1_bound_a}
\left|\left\langle u,a_{1}\right\rangle \right|^{2}\geq 1-\frac{1-\|A^{1-x}B^{x}\|^2}{\left(1-x\right)\left(1-\lambda_2(A)\right)},
\end{equation}
and
\begin{equation}
 \label{eq:theorem_1_bound_b}
\left|\left\langle v,b_{1}\right\rangle \right|^{2} \geq 1-\frac{1-\|A^{1-x}B^{x}\|^2}{x  \left(1-\mu_2(B)\right)}.
\end{equation}
\end{theorem}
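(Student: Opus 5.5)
The plan is a direct variational argument. Set $\sigma=\|A^{1-x}B^x\|$. Write $A^{1-x}B^{x}v=\sigma u$ and $B^{x}A^{1-x}u=\sigma v$ for the principal singular pair. Expand $u$ in the orthonormal eigenbasis $a_1,\dots,a_n$ of $A$, and $v$ in the eigenbasis $b_1,\dots,b_n$ of $B$. The idea is that a norm close to $1$ can only come from mass concentrated on the top eigendirection, because the other directions are damped by $\lambda_2(A)^{1-x}$, respectively $\mu_2(B)^{x}$. The endpoint cases are vacuous: for $x=1$ the right-hand side of \eqref{eq:theorem_1_bound_a} is $-\infty$ (or undefined), and likewise for \eqref{eq:theorem_1_bound_b} at $x=0$. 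So we assume $0<x<1$ where needed.

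\textbf{Bound for $u$.} We have
\[
\sigma^2=\|B^{x}A^{1-x}u\|^2\le \|B^x\|^2\,\|A^{1-x}u\|^2=\|A^{1-x}u\|^2,
\]
since $\|B^x\|=\|B\|^x=1$. With $c_i=\langle u,a_i\rangle$ and $\sum_i c_i^2=1$, this gives
\[
\sigma^2\le \sum_i \lambda_i^{2(1-x)}c_i^2\le c_1^2+\lambda_2^{2(1-x)}(1-c_1^2),
\]
where $\lambda_2=\lambda_2(A)$. Rearranging yields
\[
1-\sigma^2\ge (1-c_1^2)\bigl(1-\lambda_2^{2(1-x)}\bigr).
\]

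\textbf{Elementary inequality.} It remains to show $1-\lambda^{s}\ge \tfrac{s}{2}(1-\lambda)$ for $\lambda\in(0,1)$ and $s=2(1-x)\in[0,2]$. I would split into two cases.
\begin{itemize}
\item If $s\ge 1$, then $\lambda^s\le\lambda$, so $1-\lambda^s\ge 1-\lambda\ge (1-x)(1-\lambda)$.
\item If $s\le 1$, then $\lambda\mapsto\lambda^s$ is concave, so it lies below its tangent at $\lambda=1$: $\lambda^s\le 1-s(1-\lambda)$. Hence $1-\lambda^s\ge 2(1-x)(1-\lambda)\ge(1-x)(1-\lambda)$.
\end{itemize}
Combining with the previous display gives
\[
1-c_1^2\le \frac{1-\sigma^2}{(1-x)(1-\lambda_2(A))},
\]
which is \eqref{eq:theorem_1_bound_a}.

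\textbf{Bound for $v$.} The argument is symmetric. We use $\sigma^2=\|A^{1-x}B^xv\|^2\le\|B^xv\|^2$ together with the expansion of $v$ in the eigenbasis of $B$, and the same elementary inequality with $s=2x$. This gives \eqref{eq:theorem_1_bound_b}.

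The only step requiring any care is the scalar inequality relating $1-\lambda^{2(1-x)}$ to $(1-x)(1-\lambda)$, and the two-case convexity/concavity argument above handles it. Everything else is the submultiplicativity bound, which discards the contribution of the other factor at no cost because of the normalization $\|A\|=\|B\|=1$.
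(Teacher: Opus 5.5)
Your proof is correct, and it takes a genuinely different and much more elementary route than the paper.

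\textbf{The paper's route.} Set $\varepsilon=1-\|A^{1-x}B^{x}\|^2$. The paper extends $x\mapsto\langle A^{1-x}B^{x}v,A^{1-x}B^{x}v\rangle$ to a holomorphic function $F$ on the strip $0\le \Re z\le 1$. A harmonic-measure lemma (Brownian motion started at $x$ exits through the line $\Re z=1$ with probability $x$) transfers $F(x)\ge 1-\varepsilon$ to a boundary point with $\Re F(1+it)\ge 1-\varepsilon/x$. Unitarity of $A^{-it}$ and $B^{it}$ then gives $\|Bv\|^2\ge 1-\varepsilon/x$. The proof finishes with the spectral gap, using $\mu_2^2\le\mu_2$. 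The bound for $u$ comes from the dual function $z\mapsto u^{*}A^{1-z}B^{z}$ on the line $\Re z=0$.

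\textbf{Your route.} You drop $A^{1-x}$ already at the real point, using $\|A^{1-x}\|=1$. Your scalar inequality $1-\mu^{2x}\ge x(1-\mu)$ then plays the role of the harmonic-measure factor $x$.

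\textbf{What your route buys.} It gives more than brevity.
\begin{itemize}
\item Your intermediate estimate $\|B^{x}v\|^2\ge 1-\varepsilon$ already implies the paper's $\|Bv\|^2\ge 1-\varepsilon/x$. Concavity of $t\mapsto t^{x}$ gives $\|B^{x}v\|^2\le\|Bv\|^{2x}$, and Bernoulli's inequality finishes it.
\item Your argument actually yields $1-|\langle v,b_1\rangle|^2\le \varepsilon/(1-\mu_2^{2x})$, which is stronger than the stated bound.
\item Since $1-\mu_2^{2x}\ge x(1-\mu_2^2)$ by concavity of $s\mapsto 1-\mu_2^{s}$, it also gives the $1-\mu_2^2$ spectral-gap version mentioned in the paper's refinements.
\end{itemize}

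\textbf{What the paper's route buys.} Complex interpolation is a method that generalizes. It is the McIntosh-type template, and it still works for interpolations such as $A^{z}XB^{1-z}$ with a middle factor $X$. There only $\|AX\|$ and $\|XB\|$ are controlled, and no one-line submultiplicative bound like yours is available. For the theorem as stated, that machinery is not needed.
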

\textbf{Remarks.} Several remarks are in order.
\begin{enumerate}
    \item \cref{theorem:iff_bound} states that if $\log \|A^{1-x} B^x\|$ is \textit{close} to a line (i.e. $ \|A^{1-x} B^x\|$ is close to 1), then the left principal singular vector of $A^{1-x}B^x$ is close (in inner product) to the eigenvector $a_1$ of $A$ corresponding to the largest eigenvalue of $A$ and the right principal singular vector of $A^{1-x}B^x$ is close to the eigenvector $b_1$ of $B$ corresponding to the largest eigenvalue of $B$.
    \item The factor $(1-x)$ in \eqref{eq:theorem_1_bound_a} and $x$ in \eqref{eq:theorem_1_bound_b} are natural: one cannot hope to get too much information about $A$ from $A^{\varepsilon} B^{1-\varepsilon}$ when $0 < \varepsilon \ll 1$. Likewise, one would not expect $A^{1-\varepsilon} B^{\varepsilon}$ to provide high quality information about $B$ uniformly as $\varepsilon \rightarrow 0^+$.
    \item The factors controlling the spectral gap $1-\lambda_2(A)$ and $1- \mu_2(B)$ are also natural since we are making a pointwise statement about the eigenvectors $a_1, b_1$. If the spectral gap is small, then $\| Aa_2\| = \lambda_2(A) \sim 1$ may almost realize the operator norm. We note that the bound presented has a tighter version using deeper spectral components (see \cref{subsec:better_bound}).
\end{enumerate}

\subsection{Related results}
\label{subsec:app_related_results}

We are not aware of any such results in the literature; however, there are some philosophically related ideas. Our main motivation is a matrix inequality of Alan McIntosh \cite{mcintosh1979heinz}, which generalizes a number of older inequalities: if $A,B \in \mathbb{R}^{n \times n}$ is symmetric and positive-definite and $X \in \mathbb{R}^{n \times n}$ is arbitrary, then for any $0 < r < 1$
$$ \| A^r X B^{1-r} \| \leq \|AX\|^r \|XB\|^{1-r}.$$
This is known to imply the L\"owner-Heinz inequality \cite{lowner1934monotone}, the Heinz-Kato inequality \cite{heinz1951beitrage,kato1952notes}, the Cordes inequality \cite{cordes1987spectral}, and several other such results. The approach of McIntosh is to consider complex interpolation of operators $z \rightarrow A^{z} X B^{1-z}v$ in combination with the maximum principle; it was pointed out by one of the authors \cite{steinerberger2019refined} that such an argument comes, automatically, with stability estimates: for the maximum principle to be sharp, there cannot be too much oscillation; this argument was then carried out in \cite{steinerberger2019refined}. Our arguments follow the same philosophical line of reasoning to obtain a similar structural result in our setting.

Our work, when seen as an application to multi-manifold learning, is closely related to a kernel-based approaches. 
A key method in this direction is alternating diffusion \cite{LEDERMAN2018509, talmon2019latent}: given two matrices $A,B \in \mathbb{R}^{n \times n}$ constructed from different modalities, alternating diffusion considers their (unweighted) product $AB$ to form a composite diffusion operator. It was shown that the leading singular vectors of the product operator are associated with the geometry of the common latent variables.
Several extensions of this idea have been proposed: these include composite diffusion operators \cite{shnitzer2019recovering}, using geodesic interpolation under the affine-invariant Riemannian metric \cite{shnitzer2024spatiotemporal,Katz2025}, compositions of diffusion-type operators across time \cite{froyland2015dynamic, froyland2020dynamic}.
Another related line of research seeks functions that are jointly smooth with respect to multiple kernels \cite{dietrich2022spectral, coifman2023common}.
A classical and conceptually related notion of commonality is provided by canonical correlation analysis (CCA) \cite{hotelling1936relations} and the extension to Kernel CCA \cite{akaho2006kernel, bach2002kernel} and nonparametric CCA \cite{michaeli2016nonparametric}. We are not aware of a fine analysis of complex interpolation of operators having been previously used in the context of multi-manifold learning.

\section{Application to Multi-Manifold Learning}
\label{sec:app_brief}

We briefly describe how the interpolation framework introduced above arises in multimodal manifold learning (more details can be found in Section~\ref{section:app_to_manifold}).
We consider two datasets consisting of \emph{aligned} point clouds
\[
\{s^{(1)}_i\}_{i=1}^n \subset \mathcal{M}_1 \subset \mathbb{R}^{d_1},
\qquad
\{s^{(2)}_i\}_{i=1}^n \subset \mathcal{M}_2 \subset \mathbb{R}^{d_2},
\]
where each pair $(s^{(1)}_i, s^{(2)}_i)$ corresponds to two observations of two manifolds $\mathcal{M}_1$ and $\mathcal{M}_2$ embedded in Euclidean spaces. This setting naturally arises in multimodal data analysis, where different sensing mechanisms capture complementary views of a common phenomenon of interest.
From each point cloud, we construct a symmetric and positive-definite kernel matrix using pairwise affinities via
\[
A_{ij} = \exp\!\left(-\|s^{(1)}_i - s^{(1)}_j\|^2/\varepsilon^{(1)}\right),
\qquad
B_{ij} = \exp\!\left(-\|s^{(2)}_i - s^{(2)}_j\|^2/\varepsilon^{(2)}\right),
\]
followed by standard normalization (see Section~\ref{section:app_to_manifold} for details) resulting in symmetric positive-definite $A, B \in \mathbb{R}^{n \times n}$.

\begin{figure}[h!]
\centering
\makebox[\textwidth][c]{
\subfloat{
\includegraphics[width=0.5\textwidth]{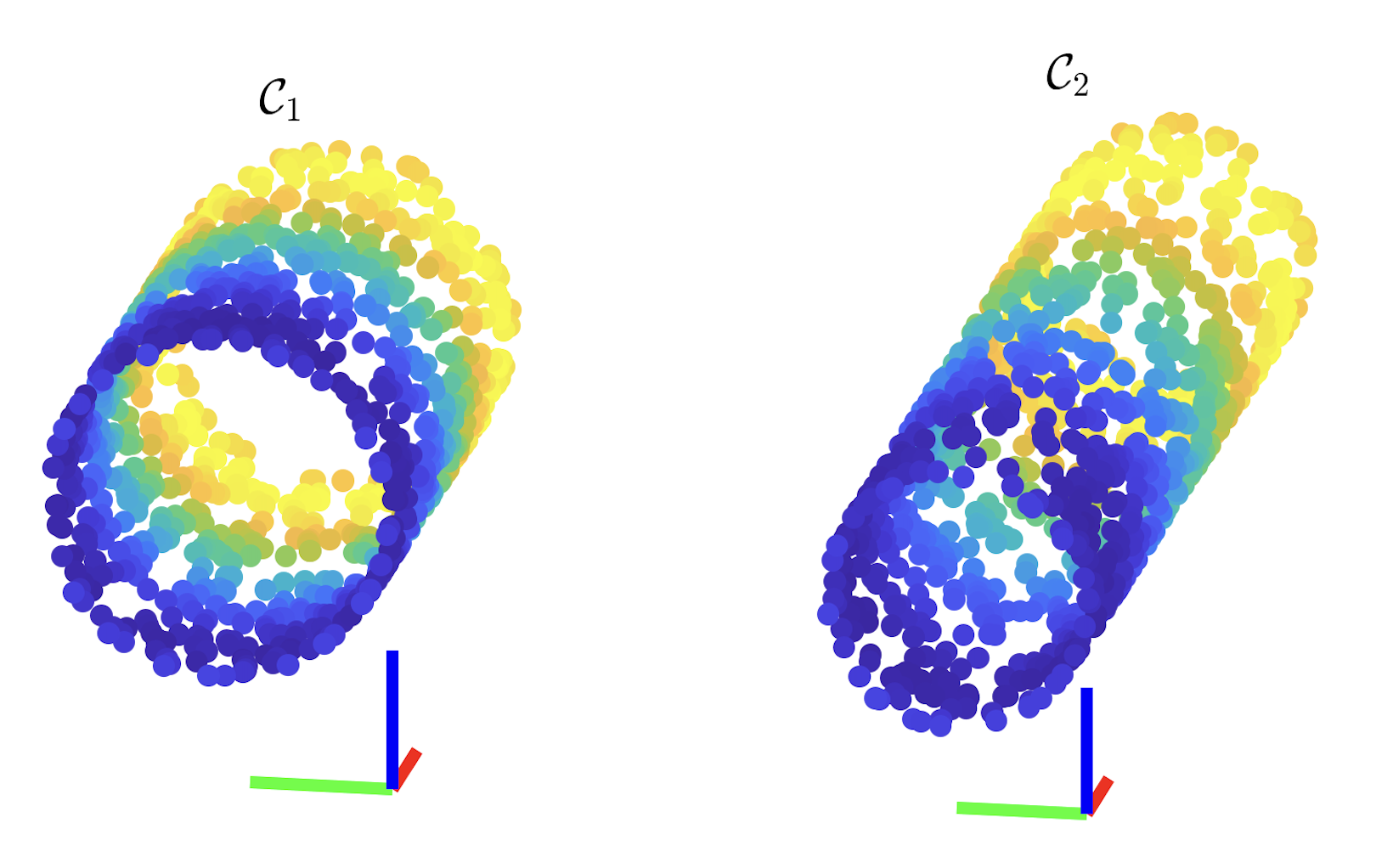}
}
}
\captionsetup{width=\textwidth}
\caption[Cylinder illustration: scatter plot]{Two aligned point clouds sampled from two 2D cylinders embedded in $\mathbb{R}^3$. The height (red) axis is shared among the point clouds, whereas the azimuthal angle is distinct.}
\label{Fig:cylinders_scatter}
\end{figure}
The central object of interest is the interpolated family
\[
\gamma(x) = A^{1-x} B^x, \qquad x \in [0,1],
\]
whose spectral properties encode relationships between the two point clouds.
To analyze $\gamma(x)$, we consider the singular values of $A^{1-x}B^x$ as functions of $x$. This leads to the construction of a \emph{singular value flow diagram} (SVFD), which tracks the evolution of the leading singular values along the interpolation path.

\begin{figure}[h!]
\centering
\makebox[\textwidth][c]{
\includegraphics[width=0.8\textwidth]{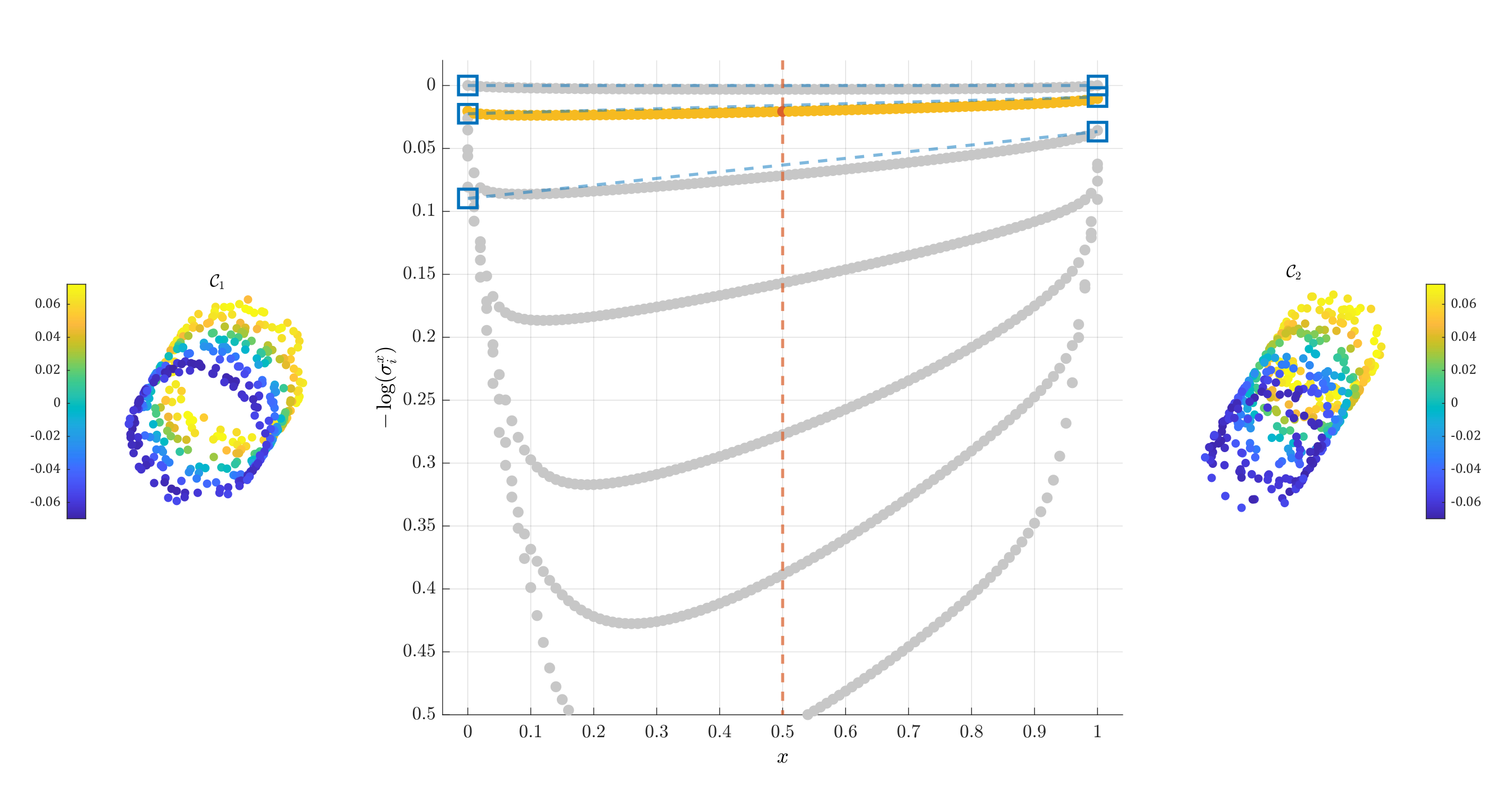}
}
\captionsetup{width=\textwidth}
\caption[Cylinders illustration: Common SVFD]{
The SVFD of the point clouds depicted in \cref{Fig:cylinders_scatter}. 
The empirical singular values of the interpolated kernels connecting the second common eigenvalues (shown left and right) are highlighted in yellow.} 
\label{Fig:cylinders_diagram_vec_2_2}
\end{figure}

In practice, this is done by sampling a discrete set of points $x_k \in [0,1]$, computing the leading singular values of $A^{1-x_k}B^{x_k}$ at each point, and plotting their logarithms as functions of $x_k$. The resulting curves provide a compact representation of how spectral components evolve between the two matrices.
Our theoretical results in \cref{sec:intro} provide a rigorous interpretation of these diagrams: approximately log-linear trajectories correspond to spectral components shared between $A$ and $B$, while curved trajectories indicate distinct components.
We illustrate the approach on a synthetic example consisting of two cylindrical manifolds with a shared latent variable. The construction is described in detail in \cref{section:app_to_manifold}.

\cref{Fig:cylinders_scatter} shows the sampled point clouds, where the vertical coordinate represents a common latent variable, while the angular coordinates differ between the two datasets. The resulting SVFD is shown in \cref{Fig:cylinders_diagram_vec_2_2}. In this figure, several singular value trajectories exhibit near log-linear behavior across the interpolation parameter $x$, indicating spectral components that are shared between the two point clouds. In particular, the highlighted trajectory (shown in yellow) closely follows a straight line in the logarithmic scale, consistent with the theoretical characterization of common eigenvectors.
The insets in \cref{Fig:cylinders_diagram_vec_2_2} further illustrate this phenomenon by coloring the two cylindrical point clouds according to the corresponding left and right singular vectors at an intermediate interpolation point (here $x=0.5$). The coloring reveals a coherent structure across both point clouds, with the variation aligned along the vertical axis, confirming that this spectral component captures the common latent variable.
\begin{figure}[h!]
\centering
\makebox[\textwidth][c]{
\includegraphics[width=0.8\textwidth]{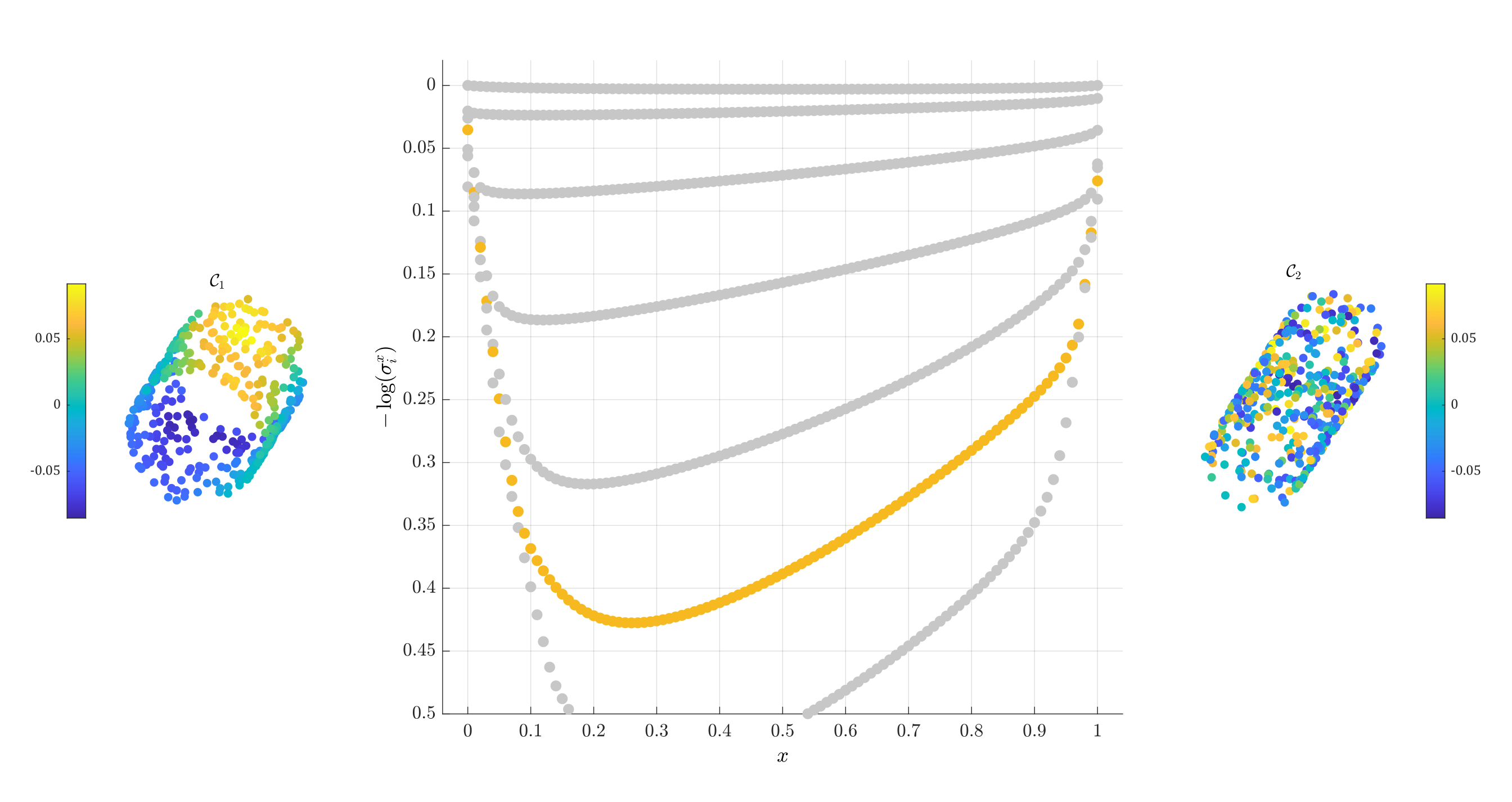}
}
\captionsetup{width=\textwidth}
\caption[Cylinders illustration: Distinct SVFD]{
The same as \cref{Fig:cylinders_diagram_vec_2_2}, but highlighting the empirical singular values originating at $x=0$ from the fourth largest eigenvalue of $\mathcal{C}_1$. The point clouds on the left and right are colored by the corresponding eigenvector. The curve is far from a line, the eigenvectors have little in common.} 
\label{Fig:distinct_cylinders_diagram_vec_4}
\end{figure}
In contrast, in \cref{Fig:distinct_cylinders_diagram_vec_4}, we highlight a trajectory associated with a distinct component. In the SVFD, this trajectory deviates significantly from log-linearity, exhibiting pronounced curvature. This behavior reflects the lack of a shared eigenstructure between the corresponding components of $A$ and $B$.
The insets in \cref{Fig:distinct_cylinders_diagram_vec_4} show the cylinders colored using the singular vector associated with this trajectory. Unlike the previous case, the coloring patterns are not consistent across the two point clouds: a structured harmonic pattern visible on one cylinder does not transfer coherently to the other. This lack of geometric alignment indicates that the corresponding spectral component does not represent a shared structure.
This example demonstrates that the geometry of the singular value trajectories provides a direct and interpretable signature of common versus distinct spectral components.

\section{Proofs}

\subsection{Proof of Theorem \ref{theorem:log_linear_lines}} 
\label{subsec:spectral_stability_proof}
\begin{proof}[Proof of Theorem \ref{theorem:log_linear_lines}]  Let us assume that
$$ \log \left\langle A^{1 -  x} B^{  x} v,  A^{1-  x}  B^{ x} v  \right\rangle \quad \mbox{is linear.}$$
This means that there exist $c_1, c_2 \in \mathbb{R}$ such that
$ \left\langle A^{1 -  x} B^{  x} v,  A^{1-  x}  B^{ x} v  \right\rangle = c_1 e^{c_2 x}.$
We first start by simplifying the expression: assuming that 
$$ Av = \sum_{k=1}^{n} \lambda_k \left\langle v, a_k \right\rangle a_k \qquad \mbox{and} \qquad Bv = \sum_{k=1}^{n} \mu_k \left\langle v, b_k \right\rangle b_k,$$ 
we have
$$ B^{x}v = \sum_{l = 1}^{n}{  \mu_l^{x}  \left\langle v, b_l \right\rangle b_l} $$
and furthermore 
$$ A^{1 -  x}  B^{x}v = \sum_{k=1}^{n}{ \lambda_k^{1 - x} \left\langle  B^{x}v , a_k \right\rangle a_k}$$
and therefore
\begin{align*}
\left\langle A^{1 -  x} B^{  x} v,  A^{1-  x}  B^{ x} v  \right\rangle  &= \left\langle \sum_{k=1}^{n}{ \lambda_k^{1-x} \left\langle  B^{x}v , a_k \right\rangle a_k},  \sum_{k=1}^{n}{ \lambda_k^{1-x} \left\langle  B^{x}v , a_k \right\rangle a_k} \right\rangle_{} \\
&= \sum_{k=1}^{n}{ \lambda_k^{2-2x} \left\langle  B^{x}v , a_k \right\rangle_{}^2}.
\end{align*}
Altogether, this implies
$$ \left\langle A^{1 -  x} B^{  x} v,  A^{1-  x}  B^{ x} v  \right\rangle  =  \sum_{k=1}^{n} \lambda_k^{2-2x} \left(  \sum_{l=1}^{n} \mu_l^x \left\langle v, b_l \right\rangle \left\langle b_l, a_k \right\rangle \right)^2 = c_1 e^{c_2 x}.$$
For the remainder of the argument, we will exploit the algebraic structure:
writing
$$ \alpha_k = \log{(\lambda_k)},~\beta_l = \log{(\mu_l)}, \quad \mbox{and} \quad c_{k,l} = \lambda_k  \left\langle v, b_l \right\rangle   \left\langle  b_l, a_k \right\rangle$$
allows to notationally simplify the equation to
$$ \sum_{k=1}^{n}{e^{- 2 \alpha_k  x} \left( \sum_{l=1}^{n}{ c_{k,l} e^{\beta_l  x } } \right)^2} = c_1 e^{c_2 x}.$$
We note that $v \neq 0$ and both $A, B$ are positive definite, their eigenvectors form a basis of $\mathbb{R}^n$ and therefore, there exists at least one $c_{k,l} \neq 0$. 
Let us now first assume that $A$ and $B$ are both simple: their eigenvalues have multiplicity 1. 
We then define the quantities
$$
 \underline{\sigma} = \min \left\{ 2\beta_l - 2\alpha_k: c_{k,l} \neq 0 \right\} \qquad \mbox{and} \qquad
 \overline{\sigma} = \max \left\{ 2\beta_l - 2\alpha_k: c_{k,l} \neq 0 \right\}.
$$
These numbers give the smallest and largest occurring frequencies: since the spectrum is assumed to be simple, for each $k$ there exists at most one $l$ such that $2 \beta_{l} - 2\alpha_k$ is maximized or minimized. Therefore,
\begin{align*}
 \sum_{k=1}^{n}{ e^{- 2 \alpha_k x} \left( \sum_{l=1}^{n}{ c_{k,l} e^{\beta_l x } } \right)^2}  &=  
 e^{\underline{\sigma} x} \left(\sum_{k,l = 1 \atop 2\beta_l - 2\alpha_k = \underline{\sigma} }^{n}{c_{l,k}^2} \right) +  \sum_{j}{d_j e^{\gamma_j x}} \\
 &+ e^{\overline{\sigma} x} \left(\sum_{k, l = 1 \atop 2\beta_l - 2\alpha_k = \overline{\sigma}}^{n}{c_{l,k}^2} \right),
\end{align*}
where the $d_j, \gamma_j$ could be explicitly computed and the arising frequencies satisfy $\underline{\sigma} < \gamma_j < \overline{ \sigma}$. Note that the cross terms of the form $\beta_l + \beta_m - 2\alpha_k$ for $l \neq m$ do not affect the extreme frequencies, and therefore, absorbed into the intermediate terms $d_j e^{\gamma_j x}$. In addition, by construction,
$$ \sum_{k,l = 1 \atop 2\beta_l - 2\alpha_k = \underline{\sigma} }^{n}{c_{l,k}^2}  \neq 0 \neq \sum_{k, l = 1 \atop 2\beta_l - 2\alpha_k = \overline{\sigma}}^{n}{c_{l,k}^2}.$$
However, in order for this expression to be $c_1 e^{c_2 x}$, we have to have
$$ \underline{\sigma} = c_2 = \overline{\sigma}.$$
This implies that whenever $c_{k,l} \neq 0$, then $2 \beta_l - 2\alpha_k = c_2$.  This equation, in turn, has a unique solution (due to the assumption of an injective $d(\mu,\lambda) = \frac{\lambda}{\mu}$) from which we deduce that there exists a single pair $(k,l)$ such that $c_{k,l} \neq 0$.  This means that there exists a single pair $(k,l)$ for which
$$   \left\langle v, b_l \right\rangle   \left\langle  b_l, a_k \right\rangle \neq 0.$$
We note that for each $l$ there exists at least one $k$ for which $\left\langle  b_l, a_k \right\rangle \neq 0$. This means there exists exactly one $l$ such that
$  \left\langle v, b_l \right\rangle \neq 0$ which implies that $v = b_l \|v\|$ and therefore $v$ is an eigenvector of $B$. Then, however, fixing this value of $l$, there can exist at most one $k$ such that $\left\langle  b_l, a_k \right\rangle \neq 0$, which implies that $v$ is also an eigenvector of $A$. 
It remains to deal with the general case. If the eigenvalues of both matrices can have multiplicities, then, arguing in exactly the same way as above, we see that we can write
$$
 \sum_{k=1}^{n}{ e^{- 2 \alpha_k x} \left( \sum_{l=1}^{n}{ c_{k,l} e^{\beta_l x } } \right)^2}  =  
 e^{\underline{\sigma} x} A_1 +  \sum_{j}{d_j e^{\gamma_j x}} + e^{ \overline{\sigma} x} A_2$$
where the expressions for $A_1$ and $A_2$ are now slightly more involved.  Since $2 \beta_l - 2 \alpha_k = \underline{\sigma}$ has a unique solution, we can call the corresponding eigenvalues $\alpha$ and $\beta$ (keeping in mind that they might have a nontrivial multiplicity).
 A short computation shows that
\begin{align*}
 A_1 &= \sum_{\alpha_k = \alpha} \left( \sum_{\beta_l = \beta} c_{k,l} \right)^2 = \sum_{\alpha_k = \alpha} \left( \sum_{\beta_l = \beta} \lambda_k  \left\langle v, b_l \right\rangle   \left\langle  b_l, a_k \right\rangle \right)^2 \\
 &= e^{2\alpha} \sum_{\alpha_k = \alpha} \left( \sum_{\beta_l = \beta} \left\langle v, b_l \right\rangle   \left\langle  b_l, a_k \right\rangle \right)^2.
 \end{align*}
 We recall an elementary fact for Hilbert spaces: if $\left\{x_1, \dots, x_n\right\}$ is an orthonormal basis of a subspace $S$ of some Hilbert space, then for all 
 $ x\in S$ and all $y \in H$,
 $$ \left\langle x, y \right\rangle = \left\langle \sum_{i=1}^{n} \left\langle x, x_i\right\rangle x_i, y \right\rangle =\sum_{i=1}^{n} \left\langle x, x_i \right\rangle \left\langle x_i, y \right\rangle.$$
 Therefore, using $\pi_{\beta}:\mathbb{R}^n \rightarrow \mathbb{R}^n$ to denote the orthogonal projection onto the eigenspace corresponding to eigenvalue $e^{\beta}$, we have
 $$ A_1 =   e^{2\alpha} \sum_{\alpha_k = \alpha} \left\langle \pi_{\beta} v, a_k \right\rangle^2.$$
 Using the Pythagorean theorem and using $\pi_{\alpha}:\mathbb{R}^n \rightarrow \mathbb{R}^n$ to denote the orthogonal projection onto the eigenspace corresponding to eigenvalue $e^{\alpha}$, we have
  $$ A_1 =   e^{2\alpha} \sum_{\alpha_k = \alpha} \left\langle \pi_{\beta} v, a_k \right\rangle^2 = e^{2\alpha} \left\| \pi_{\alpha} \pi_{\beta} v\right\|^2.$$
We note that if, for any eigenvalue $\beta$, the vector $\pi_{\beta} v \neq 0$, then there exists at least one other eigenvalue $\alpha$ for which $\pi_{\alpha} \pi_{\beta} \neq 0$. This means that $v$ has to be an eigenvector of $B$. Simultaneously, for any vector $v \neq 0$ there exists at least one eigenvalue $\alpha$ such that $\pi_{\alpha} v \neq 0$. This proves the desired statement.
\end{proof}

\subsection{Proof of Theorem \ref{theorem:iff_bound}}
 \subsubsection{Preliminaries}
 We first recall that if $A \in \mathbb{R}^{n \times n}$ is a symmetric and
positive definite matrix with eigenvalues and eigenvectors given by $A a_k = \lambda_k a_k$, then the complex power
$A^z$ for $z \in \mathbb{C}$ is defined by 
$$ A^{z}v=\sum_{k=1}^{n}\lambda_{k}^{z}\left\langle v,a_{k}\right\rangle a_{k}.$$
The purpose of this section is to recall some basic facts regarding complex powers of linear operators.
\begin{lemma}
If $A \in \mathbb{R}^{n \times n}$ is a symmetric and positive definite matrix, then $A^{it}$ is unitary for all $t \in \mathbb{R}$.
\end{lemma}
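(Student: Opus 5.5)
The plan is to work directly from the spectral definition of the complex power given just above: $A^{z}v=\sum_{k}\lambda_k^{z}\langle v,a_k\rangle a_k$, with $\{a_k\}$ an orthonormal eigenbasis of $A$ (available by the spectral theorem, since $A$ is symmetric). For real $t$ and $\lambda_k>0$ we have $\lambda_k^{it}=e^{it\log\lambda_k}$, which is a complex number of modulus one. So $A^{it}$ is diagonal in the orthonormal basis $\{a_k\}$, with unimodular diagonal entries, and the claim amounts to the fact that such an operator is unitary.

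To make this precise, I will regard $A^{it}$ as an operator on $\mathbb{C}^n$ with the Hermitian inner product; it need not map $\mathbb{R}^n$ into itself. Take $v\in\mathbb{C}^n$ and expand it as $v=\sum_k\langle v,a_k\rangle a_k$. The vectors $a_k$ are real and orthonormal, so they also form an orthonormal basis of $\mathbb{C}^n$. By Parseval,
$$\|A^{it}v\|^2=\sum_{k=1}^n|\lambda_k^{it}|^2|\langle v,a_k\rangle|^2=\sum_{k=1}^n|\langle v,a_k\rangle|^2=\|v\|^2.$$
Hence $A^{it}$ is an isometry of a finite-dimensional space, and therefore it is unitary.

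Alternatively, I can write down the adjoint explicitly: $(A^{it})^*=\sum_k\overline{\lambda_k^{it}}\,a_ka_k^{T}=A^{-it}$. Then $A^{it}A^{-it}=\sum_k a_ka_k^T=I$, which again gives unitarity.

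There is no real obstacle here. The only point needing care is that positivity of the $\lambda_k$ is what makes $\lambda_k^{it}$ well defined with modulus one, and that the relevant notion of unitarity refers to the Hermitian inner product on $\mathbb{C}^n$.
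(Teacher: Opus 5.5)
Your proof is correct and follows the paper's argument: expand in the orthonormal eigenbasis $\{a_k\}$, use $|\lambda_k^{it}|=|e^{it\log\lambda_k}|=1$, and conclude $\|A^{it}v\|=\|v\|$. The paper reaches this norm identity by splitting into cosine and sine parts, which implicitly takes $v$ real. Your direct Parseval computation on $\mathbb{C}^n$, or the explicit adjoint $(A^{it})^*=A^{-it}$, also covers complex vectors, which is what unitarity actually requires.
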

\begin{proof}
Note that, for $\lambda\geq0$, 
\begin{equation*}
\lambda^{it}=\cos\left(\left(\log\lambda\right)t\right)+i\sin\left(\left(\log\lambda\right)t\right)=e^{i\left(\log\lambda\right)t}.
\end{equation*}
The result then follows by explicit computation since
\begin{align*}
\left\Vert \sum_{k=1}^{n}\lambda_{k}^{it}\left\langle v,a_{k}\right\rangle a_{k}\right\Vert ^{2} & =\left\Vert \Re\sum_{k=1}^{n}\lambda_{k}^{it}\left\langle v,a_{k}\right\rangle a_{k}+\Im\sum_{k=1}^{n}\lambda_{k}^{it}\left\langle v,a_{k}\right\rangle a_{k}\right\Vert ^{2}\\
 & =\left\Vert \sum_{k=1}^{n}\cos\left(\log\left(\lambda_{k}\right)t\right)\left\langle v,a_{k}\right\rangle a_{k}\right\Vert ^{2}\\
 &+\left\Vert \sum_{k=1}^{n}\sin\left(\left(\log\lambda_{k}\right)t\right)\left\langle v,a_{k}\right\rangle a_{k}\right\Vert ^{2}\\
 & =\sum_{k=1}^{n}\left[\cos^{2}\left(\left(\log\lambda_{k}\right)t\right)+\sin^{2}\left(\left(\log\lambda_{k}\right)t\right)\right]\left\vert\left\langle v,a_{k}\right\rangle\right\vert^{2} \\
 &=\sum_{k=1}^{n}\left\vert\left\langle v,a_{k}\right\rangle\right\vert^{2}=\|v\|^{2}.
\end{align*}
\end{proof}

\begin{lemma}\label{appendix:cauchy_schwarz}
If $A,B \in \mathbb{R}^{n \times n}$ are two symmetric and positive definite matrices normalized to $\|A\| = 1 = \|B\|$.  Then, for all $0 \leq \Re(z) \leq 1$, we have
$$  \| A^{1-z} B^z\| \leq 1.$$
\end{lemma}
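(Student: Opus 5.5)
Write $z = x + iy$ with $0 \leq x \leq 1$ and $y \in \mathbb{R}$. The plan is to factor $A^{1-z}B^z$ into a unitary part and a real-exponent part, and then apply the preceding lemma (unitarity of imaginary powers) together with submultiplicativity of the operator norm.

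The key observation is that complex powers of a fixed symmetric positive definite matrix commute and satisfy $A^{z_1+z_2} = A^{z_1}A^{z_2}$. This holds because every power acts diagonally in the eigenbasis $\{a_k\}$ through the scalar identity $\lambda_k^{z_1+z_2} = \lambda_k^{z_1}\lambda_k^{z_2}$. Hence
$$ A^{1-z} = A^{-iy} A^{1-x}, \qquad B^{z} = B^{x} B^{iy}, $$
so that
$$ A^{1-z}B^z = A^{-iy}\, A^{1-x}\, B^{x}\, B^{iy}. $$

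By the preceding lemma, $A^{-iy}$ and $B^{iy}$ are unitary; the lemma applies to any real $t$, in particular $t=-y$. Unitary factors do not change the operator norm, so
$$ \|A^{1-z}B^z\| = \|A^{1-x}B^x\| \leq \|A^{1-x}\|\,\|B^x\|. $$

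It remains to bound the two real-power factors. Since $1-x \geq 0$ and all eigenvalues of $A$ lie in $(0,1]$ because $\|A\|=1$, the eigenvalues $\lambda_k^{1-x}$ of the symmetric matrix $A^{1-x}$ also lie in $(0,1]$. Therefore $\|A^{1-x}\| = \max_k \lambda_k^{1-x} \leq 1$. The same argument gives $\|B^x\| \leq 1$. Combining these bounds yields $\|A^{1-z}B^z\| \leq 1$.

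There is no real obstacle in this argument. The only point needing care is justifying the splitting of complex powers, which reduces to the scalar identity $\lambda^{a+b}=\lambda^a\lambda^b$ for $\lambda>0$ using the principal branch $\lambda^z = e^{z\log\lambda}$. One should also note that operator norms are taken on $\mathbb{C}^n$ once complex powers appear; the unitarity lemma extends verbatim to complex vectors.
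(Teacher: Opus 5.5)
Your proof is correct and follows essentially the paper's argument: the paper also strips off the purely imaginary powers using their unitarity and concludes with the estimate $\|A^{1-z}B^z\| \leq \|A^{1-z}\|\,\|B^z\| \leq \|A\|^{1-\Re(z)}\|B\|^{\Re(z)}$. Your factorization $A^{1-z}B^z = A^{-iy}A^{1-x}B^{x}B^{iy}$ supplies the justification for that last step, which the paper states without proof, and it makes the paper's separate boundary computations at $\Re(z)=0$ and $\Re(z)=1$ unnecessary.
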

\begin{proof}
The Cauchy-Schwarz inequality is still valid in the holomorphic case since
\begin{equation*}
\left|\left\langle v,w\right\rangle _{\mathbb{R}}\right|=\left|\sum_{i=1}^{n}v_{i}w_{i}\right|\leq\sum_{i=1}^{n}\left|v_{i}w_{i}\right|\leq\left(\sum_{i=1}^{n}\left|v_{i}\right|^{2}\right)^{\frac{1}{2}}\left(\sum_{i=1}^{n}\left|w_{i}\right|^{2}\right)^{\frac{1}{2}}=\|v\|\|w\|.
\end{equation*}
\par Now, consider $z=0+it$ for $t\in\mathbb{R}$. Using
Cauchy-Schwarz, we have that 
\begin{equation*}
\left|\left\langle A^{1-it}B^{it}v,A^{1-it}B^{it}v\right\rangle _{\mathbb{R}}\right|\leq\left\Vert A^{1-it}B^{it}v\right\Vert ^{2}.
\end{equation*}
$A^{-it}$ and $B^{it}$ are unitary matrices and $\left\Vert v\right\Vert =1$,
therefore 
\begin{equation*}
\left\Vert A^{1-it}B^{it}v\right\Vert =\left\Vert AB^{it}v\right\Vert \leq\left\Vert A\right\Vert \left\Vert B^{it}v\right\Vert =1.
\end{equation*}
By the same reasoning, we have that for $z=1+it$ 
\begin{equation*}
\left\Vert A^{-it}B^{1+it}v\right\Vert =\left\Vert B^{1-it}v\right\Vert \leq\left\Vert B\right\Vert \left\Vert B^{-it}v\right\Vert =1.
\end{equation*}
Using the trivial estimate 
\begin{align*}
\|A^{1-z}B^{z}v\| & \leq\|A^{1-z}\|\|B^{z}\|\leq\|A\|^{1-\Re(z)}\|B\|^{\Re(z)}\\
 & \leq\max(\|A\|,1)\max(\|B\|,1)
\end{align*}
Under the assumption $\|A\|=\|B\|=1$ we obtain the desired bound.
\end{proof}
We conclude with a short Lemma for a harmonic function defined on the strip
$ \mathcal{D} = \left\{z \in \mathbb{C}\colon 0 \leq \Re(z) \leq 1 \right\}.$
\begin{lemma}
Let $F: D \rightarrow \mathbb{R}$ be a harmonic function satisfying $\|F\|_{L^{\infty}(D)} \leq 1$. If, for some $0 \leq x \leq 1$, we have
$F(x,0) \geq 1 - \varepsilon$, then we have
$$ \max_{y \in \mathbb{R}} F(0,y) \geq 1 - \frac{\varepsilon}{1-x} \quad \mbox{and} \quad \max_{y \in \mathbb{R}} F(1,y) \geq 1 - \frac{\varepsilon}{x}.$$
\end{lemma}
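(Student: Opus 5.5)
The plan is to compare $F$ with the affine harmonic function that carries the boundary suprema, and then evaluate the comparison at the point $(x,0)$. Assume, as the application requires, that $F$ is continuous up to the boundary of $\mathcal{D}$. Set
$$M_0 = \sup_{y\in\mathbb{R}} F(0,y), \qquad M_1 = \sup_{y \in \mathbb{R}} F(1,y).$$
Both are at most $1$ because $\|F\|_{L^\infty}\le 1$. I read the ``max'' in the statement as a supremum, which is harmless. Define
$$L(s,t) = (1-s)M_0 + sM_1, \qquad G = F - L.$$
Then $L$ is harmonic, being affine, and $G$ is harmonic, bounded on $\mathcal{D}$, and satisfies $G\le 0$ on both boundary lines $\Re z=0$ and $\Re z=1$.

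The key step is a Phragmén--Lindelöf type maximum principle on the unbounded strip: a bounded harmonic function that is nonpositive on $\partial\mathcal{D}$ is nonpositive in $\mathcal{D}$. To prove it, fix $0<c<\pi$ and use the auxiliary function
$$h(s,t) = \Re \cos\big(c(z-\tfrac12)\big) = \cos\big(c(s-\tfrac12)\big)\cosh(ct), \qquad z = s+it.$$
It is harmonic, bounded below by $\cos(c/2)>0$ on $\mathcal{D}$, and tends to $+\infty$ as $|t|\to\infty$ uniformly in $s\in[0,1]$. For $\delta>0$ the function $G-\delta h$ is then $\le 0$ on $\partial\mathcal{D}$. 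Since $G$ is bounded, it is also $\le 0$ on the horizontal segments $|t|=T$ once $T$ is large enough. The ordinary maximum principle on the rectangle $[0,1]\times[-T,T]$ then gives $G\le \delta h$ inside the rectangle. Letting $T\to\infty$ and afterwards $\delta\to 0$ yields $G\le 0$ on $\mathcal{D}$. This is the only step with real content, since the strip is unbounded and the plain maximum principle does not apply directly. I expect it to be the main obstacle, and it is handled by this standard barrier argument. An alternative is to write $F$ as a Poisson integral over the strip, in which case the harmonic measures of the two boundary lines seen from $(x,0)$ are exactly $1-x$ and $x$. That route, however, needs the same uniqueness fact for bounded harmonic functions.

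With $G\le 0$ in hand, evaluating at $(x,0)$ gives
$$1-\varepsilon \le F(x,0) \le (1-x)M_0 + xM_1 \le (1-x)M_0 + x.$$
The last inequality uses $M_1\le 1$. Rearranging gives $(1-x)(1-M_0)\le\varepsilon$, that is, $M_0\ge 1-\varepsilon/(1-x)$ for $x<1$. Symmetrically, using $M_0\le1$ we get $x(1-M_1)\le \varepsilon$, so $M_1\ge 1-\varepsilon/x$ for $x>0$. In the endpoint cases $x=0$ or $x=1$ the corresponding bound is vacuous, read as $-\infty$, or else it follows trivially from $F(x,0)\ge 1-\varepsilon$ on that boundary line.
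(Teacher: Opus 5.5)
Your proof is correct, but it reaches the key inequality by a different route than the paper.

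The paper argues probabilistically. It starts a planar Brownian motion at $(x,0)$ and uses It\^o's formula and optional stopping, justified by the boundedness of $F$ and almost-sure exit from the strip, to obtain the exact identity $F(x,0)=\mathbb{E}[F(Z_\tau)]$. It then splits this expectation using the gambler's-ruin exit probabilities $\mathbb{P}(Z_\tau\in L)=1-x$ and $\mathbb{P}(Z_\tau\in R)=x$. The final rearrangement is the same as yours.

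You instead prove the one-sided comparison $F\le(1-s)M_0+sM_1$ on the whole strip with a Phragm\'en--Lindel\"of barrier. This is essentially the harmonic version of Hadamard's three-lines theorem. The harmonic measures $1-x$ and $x$ of the two boundary lines then appear automatically as the coefficients of the affine majorant, rather than as exit probabilities.

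Your route is elementary and deterministic, and it gives a bound at every point of the strip. It also makes explicit where boundedness on the unbounded strip is used. The harmonic function $\sin(\pi s)\sinh(\pi t)$ vanishes on both boundary lines but is positive inside, which shows that hypothesis cannot be dropped. The paper's route yields an equality rather than an inequality, carrying the same information as the strip Poisson representation quoted in its proof sketch. The cost is invoking It\^o calculus and optional stopping, whose hypotheses are only briefly asserted.

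Both arguments need $F$ to be continuous on the closed strip. You state this explicitly, and it holds in the application, where $F$ is the real part of an entire function.
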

\begin{proof}
Let $(Z_t)_{t \geq 0}$ be a standard two-dimensional Brownian motion starting at $Z_0 = (x,0)$, and let $\tau = \inf \{t > 0 : Z_t \notin D\}$ be the first exit time from the domain. $Z_t$ is an It\^o process by definition.
 Since $F$ is harmonic it is a twice continuously differentiable function, so by It\^o's formula \cite{oksendal_ito_2003} $F(Z_t)$ is also an It\^o process, whose evolution is given by:
$$ dF(Z_t) = \frac{\partial F}{\partial  t}(Z_t)dt+ \nabla F(Z_t) \cdot dZ_t + \frac{1}{2} \Delta F(Z_t) \, d^2Z_t. $$
Since $F$ is harmonic and time invariant, $\Delta F = \frac{\partial F}{\partial  t}= 0,$ , the drift term $dt$ as well as the second order term vanish. Consequently, the process $M_t = F(Z_t)$ is a local martingale (a drift-less process).
Furthermore, since $F$ is bounded ($\|F\|_{L^{\infty}} \leq 1$) and the Brownian motion exits the strip $\mathcal{D}$ almost surely, the conditions for the Optional Stopping Theorem are satisfied. This allows us to equate the function's value at the starting point to its expected value at the exit time
$ F(x,0) = \mathbb{E}[M_0] = \mathbb{E}[M_{\tau}] = \mathbb{E}\left[ F(Z_{\tau}) \right]. $
The boundary $\partial D$ consists of the left line $L = \{0\} \times \mathbb{R}$ and the right line $R = \{1\} \times \mathbb{R}$. The probability of the Brownian motion exiting through the right boundary corresponds to the initial location along the $x$ axis:
$$ \mathbb{P}(Z_{\tau} \in L) = 1-x, \quad \text{and} \quad \mathbb{P}(Z_{\tau} \in R) = x. $$
We decompose the expectation over these two exit events:
$$ F(x,0) = (1-x) \cdot \mathbb{E}[F(Z_{\tau}) \mid Z_{\tau} \in L] + x \cdot \mathbb{E}[F(Z_{\tau}) \mid Z_{\tau} \in R]. $$
Using the assumption $F(x,0) \geq 1-\varepsilon$ and the global bound $\sup_{z \in \partial D} F(z) \leq 1$, 
\begin{align*}
1 - \varepsilon &\leq (1-x) \sup_{y \in \mathbb{R}} F(0,y) + x \sup_{y \in \mathbb{R}} F(1,y) \\
&\leq (1-x) \sup_{y \in \mathbb{R}} F(0,y) + x \cdot 1.
\end{align*}
Rearranging the inequality yields:
$$ (1-x) \sup_{y \in \mathbb{R}} F(0,y) \geq 1 - x - \varepsilon, $$
which implies
$$ \sup_{y \in \mathbb{R}} F(0,y) \geq 1 - \frac{\varepsilon}{1-x}. $$
The bound for the right boundary follows by symmetry.
\end{proof}

\subsubsection{Sketch of the proof}
We follow a similar approach as in \cite{steinerberger2019refined}. We will be working on the fundamental strip
\begin{equation}\label{eq:strip_domain}
\mathcal{D} = \left\{ x+iy\in\CC : x\in\left[0,1\right], y\in \RR \right\}.
\end{equation}
\begin{figure}[h!]
    \centering
\begin{tikzpicture}
   \draw[->] (-3, 0) -- (3, 0) node[right] {$x$};
  \draw[->] (-2, -1.5) -- (-2, 1.5) node[above] {$y$};
    \draw[-, dashed, red] (-1.96, -0.25) -- (-1.96, 1.5);
    \draw[-, dashed, red] (-1.96, -1.5) -- (-1.96, -0.5) node[left, yshift=-2] {$x=0$};
    \draw[-, dashed, blue] (1, -0.25) -- (1, 2);
    \draw[-, dashed, blue] (1, -1.5) -- (1, -0.5) node[right, yshift=-2] {$x=1$};
    \node [rectangle,
    text = black,
    fill = {rgb:red,1;green,1;blue,1}, fill opacity=0.1,  draw opacity=1, text opacity=1,
    minimum width=3cm,minimum height=3.6cm] (r) at (-0.48,-0.05) {$ $};
\node at (-0.48,0.25) {$\mathcal{D}$};
\node[yshift=-2] at (r.south) {$\vdots$};
\node[yshift=8] at (r.north) {$\vdots$}; {$\vdots$};
\end{tikzpicture}
    \caption{A sketch of the domain $\mathcal{D}$.}
    \label{Fig:domain}
\end{figure}
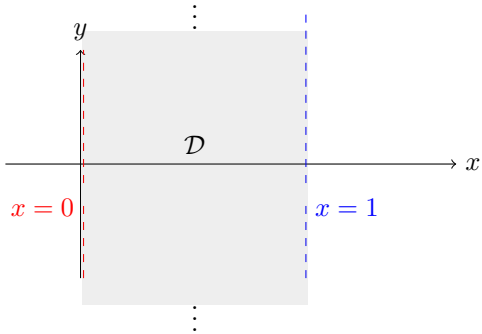

Instead of analyzing the norm of the interpolated operator directly, we will, for any arbitrary $v \in \mathbb{R}^n$, study the behavior of the expression
$$
 x \rightarrow    \left\langle A^{1-x}B^{x}v,A^{1-x}B^{x}v \right\rangle
$$
as a function of $0 \leq x \leq 1$. We note that if $v$ happens to be the principal singular vector of $A^{1-x} B^x$, then this expression is merely the square of the operator norm. Such quantities are often easier to analyze in the complex plane, and we will generalize the interpolation scheme from the real interval to $\mathcal{D}$ to the fundamental strip by instead considering the functions 
$
z \rightarrow\left\langle A^{1-z}B^{z}v,A^{1-z}B^{z}v\right\rangle _{\mathbb{R}} 
$
as well as the dual object
$
 z \rightarrow\left\langle u^*A^{1-z}B^{z},u^*A^{1-z}B^{z}\right\rangle _{\mathbb{R}}
$
both of which are holomorphic by defining $\left\langle v,w\right\rangle _{\mathbb{R}} \triangleq \sum_{k=1}^{n}v_{i}w_{i}$. We note that this reduces to the earlier expression whenever $z = t + 0i$ for $0 \leq t \leq 1$. We can now make use of the fact that any holomorphic function inside a domain is uniquely defined by its values on the boundary. Moreover, since the fundamental strip $\mathcal{D}$ is geometrically rather simple, this is completely explicit (see e.g.  \cite{widder1961strip}).
Every analytic complex-valued function $f:D\mapsto\CC$ can be represented as follows
\begin{align*}\label{eq:integral_holo_func_rep}
f\left(x+iy\right)
&= \frac{1}{2\pi}\int_{-\infty}^{\infty}{P\left(x,t-y\right)\cdot f\left(0+it\right)dt} \\
&+ \frac{1}{2\pi}\int_{-\infty}^{\infty}{P\left(1-x,t-y\right)\cdot f\left(1+it\right)dt},
\end{align*}
where $P$ is a Poisson kernel given by
\begin{equation}\label{eq:pois_kernel}
P\left(x,y\right)=\frac{\pi\sin\left(\pi x\right)}{\cosh\left(\pi y\right)-\cos\left(\pi x\right)}.
\end{equation}
This allows us to reduce the analysis of the special case $A^{1-z}B^z$ for $0 \leq \Re(z) \leq 1$ to that of $A^{1-it} B^{it}$ as well as $A^{-it} B^{1+it}$.

\subsubsection{Proof}
\label{subsec:proof_of_theorem_two}
\begin{proof}

We fix $0 \leq x \leq 1$ and define $\varepsilon$ via the relationship
\begin{equation*}
\left\lVert A^{1-x}B^{x}\right\rVert ^{2} = 1-\varepsilon.
\end{equation*}
This means that for the principal right singular vector $v$ with $\left\lVert v\right\rVert =1$, we
have 
\begin{equation*}
\left\langle A^{1-x}B^{x}v,A^{1-x}B^{x}v\right\rangle _{\mathbb{R}}\geq1-\varepsilon.
\end{equation*}

Applying Lemma 3, we deduce the existence of $t \in \mathbb{R}$ such that
$$
\Re F\left(1+it\right)=\Re\left\langle A^{-it}B^{1+it}v,A^{-it}B^{1+it}v\right\rangle _{\mathbb{R}}\geq1-\frac{\varepsilon}{x}
$$
Using the definition of complex powers, we have
$$
A^{-it}B^{1+it}v=\sum_{k=1}^{n}\lambda_{k}^{-it}\left\langle B^{1+it}v,a_{k}\right\rangle a_{k}
$$
allowing us to rewrite $\Re F\left(1+it\right)$ as
\begin{align*}
\Re F\left(1+it\right)&=\Re\left\langle A^{-it}B^{1+it}v,A^{-it}B^{1+it}v\right\rangle _{\mathbb{R}}  \\
& =\Re\left\langle \sum_{k=1}^{n}\lambda_{k}^{-it}\left\langle B^{1+it}v,a_{k}\right\rangle a_{k},\sum_{k=1}^{n}\lambda_{k}^{-it}\left\langle B^{1+it}v,a_{k}\right\rangle a_{k}\right\rangle _{\mathbb{R}}\\
& =\Re \sum_{k=1}^{n}\sum_{k'=1}^{n}\lambda_{k}^{-it}\lambda_{k'}^{-it}\left\langle B^{1+it}v,a_{k}\right\rangle \left\langle B^{1+it}v,a_{k'}\right\rangle\left\langle a_{k}, a_{k'}\right\rangle
_{\mathbb{R}} \\
&=  \sum_{k=1}^{n}\Re\left(\lambda_{k}^{-2it}\left\langle B^{1+it}v,a_{k}\right\rangle ^{2}\right) 
\end{align*}
This implies the existence of $t \in \mathbb{R}$ such that
$$
1 - \frac{\varepsilon}{x} \leq \sum_{k=1}^{n}\Re\left(\lambda_{k}^{-2it}\left\langle B^{1+it}v,a_{k}\right\rangle ^{2}\right) \leq \sum_{k=1}^{n}\left\vert\lambda_{k}^{-2it}\left\langle B^{1+it}v,a_{k}\right\rangle ^{2}\right\vert.
$$

We first observe that 
$$\lambda^{it}=\cos\left(\left(\log\lambda\right)t\right)+i\sin\left(\left(\log\lambda\right)t\right)=e^{i\left(\log\lambda\right)t}$$
and therefore $\lambda_k^{-2it}$ has a magnitude of 1. Therefore
$$ 1 - \frac{\varepsilon}{x} \leq \sum_{k=1}^{n} \left|\left\langle B^{1+it}v,a_{k}\right\rangle\right|^{2} $$
For any $w \in \mathbb{C}^n$, since the eigenvectors $a_k$ form a basis of $\mathbb{R}^n$,
\begin{align*}
\sum_{k=1}^{n} \left|\left\langle \Re w + i \Im w,a_{k}\right\rangle\right|^{2} &= \sum_{k=1}^{n}  \left|\left\langle \Re w ,a_{k}\right\rangle\right|^{2} +  \left|\left\langle \Im w ,a_{k}\right\rangle\right|^{2}  \\
&= \left\| \Re w\right\|^2 + \left\| \Im w\right\|^2 = \left\langle w, \overline{w} \right\rangle = \left\|w\right\|^2.
\end{align*}
Therefore, recalling that purely imaginary powers are unitary, 
$$ 1 - \frac{\varepsilon}{x} \leq \left\| B^{1+it} v\right\|^2 =  \left\| B^{} v\right\|^2.$$
Using the spectral theorem combined with the fact that the largest eigenvalue of $B$ is 1 (and simple) and the second largest eigenvalue is $\mu_2 < 1$, we have
\begin{align*}
 1 - \frac{\varepsilon}{x}  \leq  \left\| B^{} v\right\|^2 &\leq \left\langle v, b_1\right\rangle^2 + \mu_2 \| \pi_{b_1^{\perp}} v \|^2 \\
  &= \left\langle v, b_1\right\rangle^2 + \mu_2  \left(1-\left\langle v, b_1\right\rangle^2\right) \\
  &= \mu_2 + (1-\mu_2) \left\langle v, b_1\right\rangle^2 
  \end{align*}
This implies the desired inequality for $B$.
The exact same analysis can be applied to the boundary $z=0+it$, yielding the second part of the statement
\end{proof}

\subsection{Refinements}
\label{subsec:better_bound}
 The proof implies a slightly stronger statement. It is easily seen that the argument shows, for example, that
for any vector $v \in \mathbb{R}^n$ for which $\|A^{1-x}B^{x}v\| = \|A^{1-x}B^{x}\|$, we automatically have that 
$$\left\| Bv \right\|^{2} \geq 1-\frac{1-\|A^{1-x}B^{x}\|^2}{x }$$
which forces $\|Bv\|$ to be large. $\|Bv\|$ being large in combination with a spectral gap automatically forces that $v$ has 
a large inner product with the leading eigenvector. However, this is also the worst case; in practice, one would perhaps expect
that the vector $v \in \mathbb{R}^n$ for which $\|A^{1-x}B^{x}v\| = \|A^{1-x}B^{x}\|$ has a nontrivial inner product also with other
(smaller) eigenvectors of $B$ which then implies an even stronger concentration for the leading eigenvectors.
Following the proof of Theorem~\ref{theorem:iff_bound}, we derive a tighter bound by relaxing the reliance on the spectral gap $1-\mu_2^2$.
Resuming from $\|B v\|^2 \geq 1 - \varepsilon/x$ and expanding $v$ in the eigenbasis $\{b_m\}$ of $B$ (where $\mu_1=1$), we obtain:
$$
\sum_{m=2}^n (1 - \mu_m^2) |\langle v, b_m \rangle|^2 \leq \frac{\varepsilon}{x}.
$$
We normalize this inequality by the tail mass $1 - |\langle v, b_1 \rangle|^2 = \sum_{m=2}^n |\langle v, b_m \rangle|^2$. Defining the second moment of the tail spectrum as
$$
\rho_b \triangleq \frac{\sum_{m=2}^n \mu_m^2 |\langle v, b_m \rangle|^2}{\sum_{m=2}^n |\langle v, b_m \rangle|^2},
$$
the inequality simplifies to $(1 - |\langle v, b_1 \rangle|^2)(1 - \rho_b) \leq \frac{\varepsilon}{x}$. Rearranging terms yields the improved bound:
$$
|\langle v, b_1 \rangle|^2 \geq 1 - \frac{\varepsilon}{x(1 - \rho_b)}.
$$
This tightens the bound since $\rho_b \leq \mu_2^2$, with $\rho_b$ becoming smaller if the error aligns with high-frequency modes (small $\mu_m$). By symmetry, an analogous bound holds for the left singular vector using $A$.

\section{Application to Multi-Manifold Learning}
\label{section:app_to_manifold}
Multimodal manifold learning deals with the fundamental challenge of representing data from diverse sources and modalities. This task is crucial for data analysis, as it helps describe relationships between different data modalities, a core challenge, and a shared goal across many domains and applications.

\subsection{Setting}
\label{subsec:setting}

Consider three hidden manifolds ${\mathcal{M}}_1$, ${\mathcal{M}}_2$, and ${\mathcal{M}}_3$, which are observed through two observation functions
    \begin{align*}
        g &\colon {\mathcal{M}}_1\times{\mathcal{M}}_2\times{\mathcal{M}}_3 \to {\mathbb{S}}_1  \\
        h &\colon {\mathcal{M}}_1\times{\mathcal{M}}_2\times{\mathcal{M}}_3 \to {\mathbb{S}}_2,  
    \end{align*}
where $\mathbb{S}_1$ and $\mathbb{S}_2$ are subsets of (possibly different) Euclidean spaces. We can think of the triplet $(\mathcal{M}_1, \mathcal{M}_2, \mathcal{M}_3)$ as the underlying global structure and of the functions $g$ and $h$ as two different ways of extracting information (e.g., these functions may represent samples captured by two different sensors). 
Following \cite{LEDERMAN2018509,talmon2019latent}, we assume that $g$ is a smooth isometric embedding of $\mathcal{M}_1 \times \mathcal{M}_3$ into $\mathbb{S}_1$, ignoring $\mathcal{M}_2$, and $h$ is a smooth isometric embedding of $\mathcal{M}_2 \times \mathcal{M}_3$ into $\mathbb{S}_2$, ignoring $\mathcal{M}_1$. This assumption implies that $\mathcal{M}_3$ represents the common component of the observed data (which is often the desired piece of information), while $\mathcal{M}_1$ and $\mathcal{M}_2$ represent observation-specific perspectives (often associated with interferences).
The problem at hand is to obtain a representation of the common component given observations through $g$ and $h$ (see \cref{Fig:formal_setup}).

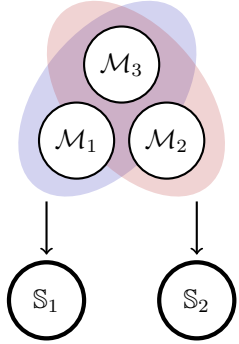
\begin{figure}[h!]
    \centering

\begin{tikzpicture}
    \fill[ rotate=50,fill={rgb:red,1;green,1;blue,5}, fill opacity=0.2] (0.3, 0.5) ellipse (1.5cm and 0.9cm);
     \fill[ rotate=-50,fill={rgb:red,5;green,1;blue,1}, fill opacity=0.2] (-0.3, 0.5) ellipse (1.5cm and 0.9cm);
    \draw[thick, fill=white] (0,1) ellipse (0.5cm and 0.5cm);
    \node at (0,1) {$\mathcal{M}_3$};
        \draw[thick, fill=white] (-0.6,0) ellipse (0.5cm and 0.5cm);
    \node at (-0.6,0) {$\mathcal{M}_1$};
        \draw[thick, fill=white] (0.6,0) ellipse (0.5cm and 0.5cm);
    \node at (0.6,0) {$\mathcal{M}_2$};

\draw [thick, ->] (-1, -0.8) -- (-1, -1.5);
\draw [thick, ->] (1, -0.8) -- (1, -1.5);
    \draw[ultra thick] (-1,-2.1) ellipse (0.5cm and 0.5cm);
    \node at (-1,-2.1) {$\mathbb{S}_1$};
    \draw[ultra thick] (1,-2.1) ellipse (0.5cm and 0.5cm);
        \node at (1,-2.1) {$\mathbb{S}_2$};
\end{tikzpicture}
    \captionsetup{width=\textwidth}
    \caption[Illustration of formal manifold setup]{A sketch of the multimanifold learning setup: three hidden manifolds and two observables, where at each observation only one manifold ($\mathcal{M}_3$) is common.}
    \label{Fig:formal_setup}
\end{figure}

Consider $n$ inaccessible samples $\{(x_i, y_i, z_i)\}_{i=1}^{n}$ from some joint distribution supported on the product of the hidden manifolds $\mathcal{M}_1 \times \mathcal{M}_2 \times \mathcal{M}_3$, which give rise to $n$ pairs of accessible data points $\{(s^{(1)}_i, s^{(2)}_i)\}_{i=1}^{n}$ such that $s^{(1)}_i = g(x_i, y_i, z_i)$ and $s^{(2)}_i = h(x_i, y_i, z_i)$.
The two sets of samples are viewed as a discretization of the respective underlying manifolds. We compute two kernels, one for each set, consisting of pairwise local affinities between the observed samples. Typically, the positive Gaussian kernel is used to measure local affinities, i.e.,
\begin{align}
    A_{i,j} &= \exp\left(-\|s_i^{(1)}-s_j^{(1)}\|^2_2/\varepsilon^{(1)}\right) \label{eq:affinity_kernels1}\\
    B_{i,j} &= \exp\left(-\|s_i^{(2)}-s_j^{(2)}\|^2_2/\varepsilon^{(2)}\right)
    \label{eq:affinity_kernels2}
\end{align}
for $i,j=1,\ldots,n$, where $\varepsilon^{(1)},\varepsilon^{(2)}>0$ are two scale parameters. After applying the conventional normalizations to the kernels (see \cite{coifman2006diffusion}), we obtain two symmetric positive definite matrices $A, B \in \text{Sym}_n^{+} \subset \mathbb{R}^{n \times n}$, where $\|A\|=\|B\|=1$.

\subsection{Algorithm}
Here, following \cite{Katz2025}, we take an approach that relies on kernel interpolation.  We apply the eigenvalue decomposition to the normalized kernels $A$ and $B$ to obtain their eigenvalues, denoted as $\lambda_k$ and $\mu_m$, respectively. 
To analyze the relationship between the two sets of measurements, we interpolate between $A$ and $B$ according to the continuous map $\gamma:[0,1]\rightarrow \mathbb{R}^{n \times n}$ given by
\begin{equation}
\label{eq:simple_geometric_mean_weighted_interpolation}
    \gamma(x) = A^{1-x} B^{x}.
\end{equation} 
For the interpolation, we use a regular grid with $M+1$ points, $x_i = \frac{i}{M}$ for $0\leq i \leq M$, yielding $M+1$ symmetric positive-definite matrices $\gamma(x_i)$. 
To each matrix $\gamma(x_i)$, we apply the singular value decomposition and obtain the top $K$ singular values $\{\sigma_{x_i}^{k}\}_{k=1}^{K}$. 
We then generate a diagram depicting the variation of the top $K$ singular values across the interpolated points by plotting them along the interpolation axis $x$, where for each $x_i$ we have $K$ singular values (see \cref{subsec:numerical_results}). For the special cases $x_i=0$ and $x_i=1$, the singular values coincide with the eigenvalues: $\sigma_{0}^{k} = \lambda_k$ and $\sigma_{1}^{k} = \mu_k$ for $k=1,\ldots,K$.
We summarize this in \cref{alg:svfd}.

\begin{algorithm}[h!]
\caption[Singular Values Flow Diagram (SVFD) Generation]{Singular Values Flow Diagram Generation}
\label{alg:svfd}
\begin{algorithmic}[l]
  \Input Two sets of aligned measurements, $\{s^{(1)}_i\}_{i=1}^{n} \subset \mathbb{S}_{1},\ \{s^{(2)}_i\}_{i=1}^{n} \subset \mathbb{S}_{2}$ 
  \Output Singular values diagram 
   \Parameters \\
   	\begin{itemize}
		\item $M$ -- The number of points on the interpolation axis 
		\item $K$ -- The number of singular values at each interpolation point
	\end{itemize}

\State Build two kernels for the two input sets of measurements:
		\begin{algsubstates}
			\State Construct two SPD affinity kernels $A$ and $B$ using \cref{eq:affinity_kernels1} and \cref{eq:affinity_kernels2}.
			\State Normalize the kernels.
		\end{algsubstates}	
		\State Consider a regular grid of $M+1$ points $\{x_i=\frac{i}{M}\}_{i=0}^{M}$ in $[0,1]$. \\
        For each $x_i$:
		\begin{algsubstates}
			\State Compute the matrix $A^{1-x_i}B^{x_i}$
			\State Apply SVD and obtain the largest $K$ singular values $\{\sigma_{x_i}^{k}\}_{k=1}^{K}$
			\State Scatter plot the logarithm of the obtained singular values as a function of $x_i$.
		\end{algsubstates}	
\end{algorithmic}
\end{algorithm}

\subsection{An example}
\label{subsec:numerical_results}
We demonstrate our method on a pair of cylindrical surfaces, denoted by \( \mathcal{C}_1 \) and \( \mathcal{C}_2 \) and illustrated in \cref{Fig:cylinders_scatter}.
We sample $n=1000$ tuples $\left\{\left(x_i, y_i, z_i\right)\right\}^{n}_{i=1}$ uniformly from a product of three hidden 1D manifolds $\mathcal{S}^1 \times \mathcal{S}^1 \times \left[0,2\pi\right]$, where $x_i \in \mathcal{S}^1, y_i \in \mathcal{S}^1, z_i \in [0,2\pi] $, and $\mathcal{S}^1$ denotes the 1D sphere. These samples are then mapped onto two cylindrical surfaces using the functions $g, h$
\begin{align*}
s^{\left(1\right)}_i = g(x_i,y_i,z_i) = \frac{1}{2\pi}  
\begin{pmatrix}
P_1\cos(x_i)  \\
P_1\sin(x_i) \\
L_1 \cdot z_i \\
\end{pmatrix}, \ \ 
s^{\left(2\right)}_i =h(x_i,y_i,z_i) = \frac{1}{2\pi}  
\begin{pmatrix}
P_2\cos(y_i)  \\
P_2\sin(y_i) \\
L_2 \cdot z_i \\
\end{pmatrix},
\end{align*}
where the parameters are set to $L_1=2, P_1=1.25, L_2=2, P_2=3$.
The mapped samples are viewed as observations on 2D cylinders embedded in \( \mathbb{R}^3 \), where the common variable \( z_i \in [0,2\pi] \) represents the height coordinate, and \( x_i \in \mathcal{S}^1 \) and \( y_i \in \mathcal{S}^1 \) are distinct and represent azimuthal angles. 
A 2D cylindrical surface with Neumann boundary conditions has a spectrum that is analytically tractable. Specifically, the eigenvalues of $\mathcal{C}_1$ and $\mathcal{C}_2$ are given respectively by the following closed-form expressions:
\[ \lambda^{\left(k_x, k_z\right)}_1 = \left(\frac{\pi k_z}{L_1}\right)^2 + \left(\frac{2\pi}{P_1} \left\lfloor \frac{k_x}{2} \right\rfloor\right)^2,\ \\
\lambda^{\left(k_y, k_z\right)}_2 = \left(\frac{\pi k_z}{L_2}\right)^2 + \left(\frac{2\pi}{P_2} \left\lfloor \frac{k_y}{2} \right\rfloor\right)^2,
\]
where $k_x, k_y, k_z = 0, 1, 2, \ldots$ are indices. We see in these expressions that the ``degree of commonality'' is determined by the ratio between the shared height, $L_1$ or $L_2$, and the distinct perimeter, $P_1$ or $P_2$, respectively: a small ratio pushes common eigenvalues deeper in the spectrum, whereas a large ratio does so for distinct ones.
We apply Algorithm \ref{alg:svfd} with $M=51$ to the two sets of samples on the two cylinders. 
In \cref{Fig:cylinders_diagram_vec_2_2}, we plot the resulting singular values diagram  (gray). On the boundaries, at \( x=0 \) and \( x=1 \), using the following relation~\cite[equation (7)]{dsilva2015parsimonious}:
\begin{equation}
    \tilde{\lambda} = \exp\left(-\frac{\varepsilon^2}{4}\lambda\right),
    \label{eq:exponent_eigenvalues_svfd_relation}
\end{equation}
we overlay three common analytical eigenvalues of the two cylinders (setting $k_x=k_y=0$) on the empirical eigenvalues of \( \mathcal{C}_1 \) and \( \mathcal{C}_2 \) and mark them by blue squares. Dashed lines show the log-linear interpolation between corresponding analytical eigenvalues (with the same $k_z$ index). We see that the resulting empirical singular values at any interpolated point \( x \in (0,1) \) nearly coincide with the log-linear interpolation between the analytical spectrum.
In \cref{Fig:distinct_cylinders_diagram_vec_4}, we show the same SVFD as in \cref{Fig:cylinders_diagram_vec_2_2}, but now highlight the empirical singular values corresponding to two non-common spectral components. Specifically in \cref{Fig:distinct_cylinders_diagram_vec_4}, we examine the fourth-largest eigenvector of $\mathcal{C}_1$, which corresponds to azimuthal oscillations. 
 The SVFD presents common and non-common spectral components differently: curves associated with eigenpairs that share the common height variable are approximately straight curves that closely follow the dashed interpolations, while eigenpairs dominated by the distinct azimuthal variables give rise to curved trajectories.

\subsection{Concluding remarks} 
The proposed interpolation \( \gamma(x) = A^{1-x} B^{x} \) in \eqref{eq:simple_geometric_mean_weighted_interpolation} enables a separation between common and non-common spectral components. It is efficient and mathematically tractable, and we show both theoretically and empirically that it conveys not only dichotomous information, but also the degree of commonality of the components.
However, the considered interpolation is not unique and raises several questions. For example, does the order of the matrices in the product affect the result? 
For instance, symmetric interpolations such as \( B^{x/2} A^{1-x} B^{x/2} \), \( A^{1-x} B^{2x} A^{1-x} \), or \( B^{x} A^{2(1-x)} B^{x} \) can be considered. In \cite{Katz2025}, another symmetric interpolation scheme based on the geodesic between two symmetric positive-definite matrices under the affine-invariant metric \cite{pennec2006riemannian,Bhatia} was presented. Similarly, one could consider geodesics, or other trajectories on the symmetric positive-definite manifold, induced by different Riemannian metrics. 
We have established a theoretical framework and provided tools for such an approach to multimodal manifold learning via kernel interpolation; we believe that the theorems presented here can serve as a blueprint for what is possible more generally.

\section*{Acknowledgments}
This work was funded by the European Union's Horizon 2020 research and innovation programme under Grant 802735-ERC-DIFFOP.

\bibliographystyle{siamplain}
\bibliography{references}
\end{document}

%% file: main_shared.tex
\usepackage{lipsum}
\usepackage{amsfonts}
\usepackage{graphicx}
\usepackage{epstopdf}
\usepackage{caption}
\usepackage{thmtools}
\usepackage{subfig}
\usepackage{tikz}
\usepackage{amssymb}

\usepackage{algorithmicx}

\usepackage[noend]{algpseudocode}

\newcounter{algsubstate}

\makeatother
\newenvironment{algsubstates}
{\setcounter{algsubstate}{0}%
	\renewcommand{\State}{%
		\refstepcounter{algsubstate}
		\Statex {\footnotesize\alph{algsubstate}:}\space}}
{}

\ifpdf
  \DeclareGraphicsExtensions{.eps,.pdf,.png,.jpg}
\else
  \DeclareGraphicsExtensions{.eps}
\fi

\newsiamremark{remark}{Remark}
\newsiamremark{hypothesis}{Hypothesis}
\crefname{hypothesis}{Hypothesis}{Hypotheses}
\newsiamthm{claim}{Claim}
\newsiamremark{fact}{Fact}
\crefname{fact}{Fact}{Facts}

\newcommand{\RR}{\mathbb{R}}
\newcommand{\CC}{\mathbb{C}}

\algnewcommand\algorithmicinput{\hspace*{\algorithmicindent}\textbf{Input:}}
\algnewcommand\Input{\item[\algorithmicinput]}

\algnewcommand\algorithmicoutput{\hspace*{\algorithmicindent}\textbf{Output:}}
\algnewcommand\Output{\item[\algorithmicoutput]}

\algnewcommand\algorithmicparams{\hspace*{\algorithmicindent}\textbf{Parameters:}}
\algnewcommand\Parameters{\item[\algorithmicparams]}

\headers{Complex Interpolation of Matrices}{A. Arbel, S. Steinerberger, and R. Talmon}

\title{Complex Interpolation of Matrices with an application to Multi-Manifold Learning
}

\author{Adi Arbel\thanks{Viterbi Faculty of Electrical and Computer Engineering, Technion -- Israel Institute of Technology,
Haifa, Israel (\email{adi.arbel@campus.technion.ac.il}).}
\and Stefan Steinerberger\thanks{Department of Mathematics and Department of Applied Mathematics, University of Washington, Seattle, WA 98195, USA 
  (\email{steinerb@uw.edu}).}
\and Ronen Talmon\thanks{Viterbi Faculty of Electrical and Computer Engineering, Technion -- Israel Institute of Technology,
Haifa, Israel (\email{ronen@ee.technion.ac.il}).}}
\usepackage{amsopn}
